\crefname{figure}{figure}{figures}
\Crefname{Figure}{Figure}{Figures}
\Crefname{assumption}{\textbf{H}\hspace{-3pt}}{\textbf{H}\hspace{-3pt}}
\crefname{assumption}{\textbf{H}}{\textbf{H}}
\Crefname{assumptionA}{\textbf{A}\hspace{-3pt}}{\textbf{A}\hspace{-3pt}}
\crefname{assumptionA}{\textbf{A}}{\textbf{A}}
\Crefname{assumptionS}{\textbf{S}\hspace{-3pt}}{\textbf{S}\hspace{-3pt}}
\crefname{assumptionS}{\textbf{S}}{\textbf{S}}
\Crefname{probleme}{\textbf{Problem}\hspace{-3pt}}{\textbf{Problem}\hspace{-3pt}}
\crefname{probleme}{\textbf{Problem}}{\textbf{Problem}}
\Crefname{assumptionG}{\textbf{G}\hspace{-3pt}}{\textbf{G}\hspace{-3pt}}
\crefname{assumptionG}{\textbf{G}}{\textbf{G}}
\newcommand*{\figuretitle}[1]{\textit{\textbf{#1.}}}
\newcommand{\wrt}{with respect to}
\newcommandx{\norm}[2][1=]{\ifthenelse{\equal{#1}{}}{\left\Vert #2 \right\Vert}{\left\Vert #2 \right\Vert^{#1}}}
\newcommandx{\normLigne}[2][1=]{\ifthenelse{\equal{#1}{}}{\Vert #2 \Vert}{\Vert #2\Vert^{#1}}}
\def\rset{\mathbb{R}}
\def\rmd{\mathrm{d}}
\def\rme{\mathrm{e}}
\newcommand{\R}{\mathbb R}
\newcommandx{\functionspace}[2][1=+]{\mathbb{F}_{#1}(#2)}
\newcommand{\argmin}{\operatorname*{arg\,min}}
\newcommandx{\VarDeux}[3][3=]{\operatorname{Var}^{#3}_{#1}\left\{#2 \right\}}
\newcommand{\LeftEqNo}{\let\veqno\@@leqno}
\newcommand{\N}{\ensuremath{\mathbb{N}}}
\newcommand{\PE}{\mathbb{E}}
\newcommandx{\Vnorm}[2][1=V]{\| #2 \|_{#1}}
\newcommandx{\VnormEq}[2][1=V]{\left\| #2 \right\|_{#1}}
\newcommand{\parenthese}[1]{\left(#1 \right)}
\newcommand{\parentheseDeux}[1]{\left[ #1 \right]}
\newcommand{\defEns}[1]{\left\lbrace #1 \right\rbrace }
\newcommandx\probaMarkovTilde[2][2=]
\newcommand{\expe}[1]{\PE \left[ #1 \right]}
\newcommand{\bigO}{\ensuremath{\mathcal O}}
\newcommand\numberthis{\addtocounter{equation}{1}\tag{\theequation}}
\def\ie{\textit{i.e.}}
\def\as{\textit{a.s}}
\def\eqsp{\;}
\newcommandx{\weight}[2][2=n]{\omega_{#1,#2}^N}
\newcommand{\ball}[2]{\operatorname{B}(#1,#2)}
\def\as{\ensuremath{\text{a.s.}}}
\newcommandx\sequence[3][2=,3=]
\newcommandx\sequenceD[3][2=,3=]
\newcommandx{\sequencen}[2][2=n\in\N]{\ensuremath{\{ #1_n, \eqsp #2 \}}}
\newcommandx\sequenceDouble[4][3=,4=]
\newcommandx{\sequencenDouble}[3][3=n\in\N]{\ensuremath{\{ (#1_{n},#2_{n}), \eqsp #3 \}}}
\newcommand{\opnorm}[1]{{\left\vert\kern-0.25ex\left\vert\kern-0.25ex\left\vert #1 
    \right\vert\kern-0.25ex\right\vert\kern-0.25ex\right\vert}}
\def\Id{\operatorname{Id}}
\newcommandx{\CPE}[3][1=]{{\mathbb E}_{#1}\left[\left. #2 \middle \vert #3 \right. \right]} 
\newcommandx{\CPVar}[3][1=]{\mathrm{Var}^{#3}_{#1}\left\{ #2 \right\}}
\newcommand{\CPP}[3][]
{\ifthenelse{\equal{#1}{}}{{\mathbb P}\left(\left. #2 \, \right| #3 \right)}{{\mathbb P}_{#1}\left(\left. #2 \, \right | #3 \right)}}
\def\scrB{\mathscr{B}}
\def\scrF{\mathscr{F}}
\def\scrG{\mathscr{G}}
\def\scrP{\mathscr{P}}
\newcommandx{\osc}[2][1=]{\mathrm{osc}_{#1}(#2)}
\def\Id{\operatorname{Id}}
\newcommand\coupling[2]{\Gamma(\mu,\nu)}
\renewcommand{\geq}{\geqslant}
\renewcommand{\leq}{\leqslant}
\def\vareps{\varepsilon}
\newcommandx{\KL}[2]{\text{KL}\left( #1 | #2 \right)}
\begin{document}
\title{Macrocanonical Models for Texture Synthesis}
%
%
\author{Valentin De Bortoli\inst{1}\and
Agnès Desolneux\inst{1} \and
Bruno Galerne\inst{2} \and
Arthur Leclaire\inst{3}}
\authorrunning{V. De Bortoli et al.}
%
\institute{Centre de mathématiques et de leurs applications, CNRS, ENS 
Paris-Saclay, Université Paris-Saclay, 94235, Cachan cedex, France. \and
  Institut Denis Poisson, Universit\'{e} d'Orléans, Universit\'{e} de Tours, CNRS \and
Univ. Bordeaux, IMB, Bordeaux INP, CNRS, UMR 5251, F-33400 Talence, France}
\maketitle              
\begin{abstract}
  In this article we consider macrocanonical models for texture synthesis.
  In these models samples are generated given an input texture image and a set of
  features which should be matched in expectation. It is known that if the images
  are quantized, macrocanonical models are given by Gibbs measures, using the
  maximum entropy principle. We study conditions under which this result extends to
  real-valued images. If these conditions hold, finding a macrocanonical model amounts
  to minimizing a convex function and sampling from an associated Gibbs measure. We analyze an
  algorithm which alternates between sampling and minimizing. We present experiments with
  neural network features and study the drawbacks and advantages of using this sampling scheme.

\keywords{Texture synthesis, Gibbs measure, Monte Carlo methods, Langevin algorithms, Neural networks}
\end{abstract}
%
%

\section{Introduction}

In image processing a texture can be defined as an image which contains repetitive patterns but also randomness
in the pattern placement or in the pattern itself. This vague and unformal definition covers a large class
of images such as the ones of terrain, plants, minerals, fur and skin.
Exemplar-based texture synthesis aims at synthesizing new images of arbitrary size which have the same perceptual characteristics
as a given input texture. It is a challenging task to give a mathematical
framework which is not too restrictive, thus describing many texture images, and not too broad, so that
computations are numerically feasible. In the literature two classes of exemplar-based texture synthesis algorithms have been considered:
the parametric and the non-parametric texture algorithms. Non-parametric texture methods do not rely on an explicit
image model in order to produce outputs. For instance copy-paste algorithms such as \cite{efros1999texture} 
fill the output image with sub-images from the input. Another example is given by \cite{galerne2018texture} in which the authors apply optimal transport tools in a multiscale patch space.

In this work we focus on parametric exemplar-based texture synthesis algorithms.
In contrast to the non-parametric approach they provide an explicit image model. 
Output textures are produced by sampling from this image model. In order to derive such a model perceptual features have to be carefully selected along with a corresponding sampling algorithm.
There have been huge progress in both directions during the last twenty years.

First, it should be noted that textures which do not exhibit long-range correlations and are well described
by their first and second-order statistics can be modeled with Gaussian random fields 
\cite{vanwijk1991spotnoise}, \cite{galerne2011random}.
These models can be understood as maximum entropy distributions given a mean and a covariance matrix. Their simplicity allows for fast sampling as well as good mathematical understanding of the model. However, this simplicity also restricts the class of textures which can be described. Indeed, given more structured inputs,
these algorithms do not yield satisfactory visual results. It was already noted by Gagalowicz \cite{gagalowicz1986model} that first and second-order statistics are not sufficient to synthesize real-world textures images.
In \cite{cano1988hierarchical} the authors remark that multiscale features capture perceptual characteristics. Following this idea, algorithms based on steerable pyramids \cite{heeger1995pyramid}, wavelet coefficients \cite{portilla2000parametric} or wavelet coefficients combined with geometrical
properties \cite{peyre2010grouplet} provide good visual results for a large class of textures. Using Convolutional
Neural Networks (CNN), and especially the VGG model \cite{simonyan2014vgg}, Gatys et al. in \cite{gatys2015texture} obtain striking visual results using Gram matrices computed on the layers of the neural network. All these models are called microcanonical textures according to Bruna and Mallat \cite{bruna2018multiscale}, in the sense that they approximately match statistical constraints almost surely (\as). Indeed, the previously  introduced algorithms start from a noisy input containing all the randomness of the process, then use a (deterministic) gradient descent (or any other optimization algorithm)
in order to fit constraints.%

On the other hand, models relying on constraints in expectation have been considered in \cite{zhu1998filters}.
They correspond to macrocanonical textures according to \cite{bruna2018multiscale}.
They have the advantage to be described by exponential distributions and thus, since their distribution can be made
explicit up to some parameters, standard statistical tools can be used for mathematical analysis. However, as noted in \cite{bruna2018multiscale} they often
rely on Monte Carlo algorithms which can be slow to converge. Zhu et al. \cite{zhu1998filters} consider a bank of linear and non-linear filters in order to build an exponential model. Texture images are supposed to be quantized and a Gibbs sampler on each pixel is used
in order to update the image. In \cite{lu2015learning} the authors propose to use first-order statistics computed on CNN outputs. They also suggest to use a Langevin algorithm in order to update the whole image at each iteration. It has also been remarked in \cite{ulyanov2016texture} that specific Generative Adversarial Networks (GAN) \cite{jetchev2016texture} which produce satisfying outputs from a perceptual point of view but lack mathematical understanding can be embedded in an expectation constraint model using the Maximum Mean Discrepancy principle \cite{gretton2006kernel}.

Our contribution is both theoretical and experimental. After recalling the definition of microcanonical models in Section \ref{sec:micr-models} we give precise conditions under which macrocanonical models, \ie \ maximum entropy models, can be written as exponential distributions in Section \ref{sec:macr-models}. In Section \ref{sec:feature_ex}, we examine how 
these conditions translate into a neural network model. Assuming that the maximum entropy principle is satisfied we then turn to the search of the parameters in such a model. 
The algorithm we consider, which was already introduced without theoretical proof of convergence in \cite{lu2015learning}, relies on the combination of a gradient descent dynamic, see Section \ref{sec:maximizing-entropy}, and a discretized Langevin dynamic, see Section \ref{sec:sampling-from-gibbs}. Using new results on these Stochastic Optimization with Unadjusted Kernel (SOUK) algorithms \cite{debortoli2019souk} 
convergence results hold for the algorithm introduced in \cite{lu2015learning}, see Section \ref{sec:combining-dynamics}. We then 
provide experiments and after assessing the empirical convergence of our algorithm in Section \ref{sec:empir-conv-sampl} we investigate our choice of models in Section \ref{sec:neur-netw-feat-2}. We draw the conclusions and limitations of our work in Section \ref{sec:concl-persp}.


\section{Maximum entropy models}
\subsection{Microcanonical models}
\label{sec:micr-models}
Let $x_0$ be a given input texture. For ease of exposition we consider that $x_0 \in \rset^d$, with $d \in \N$, but our results extend to images and color images. We aim at sampling $x$ from a probability distribution satisfying $f(x) \approx f(x_0)$, where $f: \rset^d \to  \rset^p$ are some statistics computed over the images. However if such a probability distribution exists it is not necessarily unique. In order for the problem to be well-posed we introduce a reference function $J: \rset^d \to (0,+\infty)$ such that $\int_{\rset^d} J(x) \rmd \lambda (x) < + \infty$ and we associate to $J$ a probability distribution $\Pi_J$ such that $\Pi_J(A) = Z_J^{-1} \int_{A} J(x) \rmd \lambda(x)$ with $Z_J =\int_{\rset^d} J(y) \rmd \lambda(y)$ for any $A \in \scrB(\rset^d)$, the Borel sets of $\rset^d$. Let $\scrP$ be the set of probability distributions over $\scrB(\rset^d)$. If $ \Pi \in \scrP$ is absolutely continuous \wrt \ the Lebesgue measure $\lambda$ we denote by $\frac{\rmd\Pi}{\rmd \lambda}$ the probability density function of $\Pi$. We introduce the $J$-entropy, see \cite{jaynes1957info}, $H_J: \scrP \to [-\infty, +\infty)$ such that for any $\Pi \in \scrP$
\begin{equation*}
  H_J(\Pi) = \begin{cases} - \int_{\rset^d} \log\parentheseDeux{ \frac{\rmd \Pi}{\rmd \lambda}(x) J(x)^{-1}}  \frac{\rmd \Pi}{\rmd \lambda}(x) \rmd \lambda(x) & \text{if  $\frac{\rmd \Pi}{\rmd \lambda}$ exists \ ;}\\ -\infty & \text{otherwise.} \end{cases}
\end{equation*}
The quantity $H_J$ is closely related to the Kullback-Leibler divergence between $\Pi$ and $\Pi_J$. We recall that, if $\Pi$ is absolutely continuous \wrt \ $\lambda$, we have $\KL{\Pi}{\Pi_J} = \int_{\rset^d} \log\parentheseDeux{\frac{\rmd \Pi}{\rmd \lambda}(x) \frac{\rmd \Pi_J}{\rmd \lambda}(x)^{-1} }  \frac{\rmd \Pi}{\rmd \lambda}(x) \rmd \lambda(x)$, and $+\infty$ otherwise. 
Since $\frac{\rmd \Pi_J}{\rmd \lambda}(x) = Z_J^{-1} J(x)$ we obtain that
for any $\Pi \in \scrP$, $H_J(\Pi) = - \KL{\Pi}{\Pi_J} + \log(Z_J)$.
The following definition gives a texture model for which statistical constraints are met \as
\begin{definition}
  The probability distribution function $\widetilde{\Pi} \in \scrP$ is a \emph{microcanonical model} associated with the exemplar texture $x_0 \in \rset^d$, statistics $f: \rset^d \to  \rset^p$ and reference $J$ if 
  \begin{equation}
    \label{eq:micro_model}
    H_J(\widetilde{\Pi}) = \max \defEns{H_J(\Pi), \ \Pi \in \scrP, \ f(X) = f(x_0) \ \as \ \text{if} \ X \sim \Pi} \eqsp .
  \end{equation}
\end{definition}
Most algorithms which aim at finding a microcanonical model apply a gradient descent algorithm on the function $x\mapsto \| f(x) - f(x_0) \|^2$ starting from an initial white noise. The intuition behind this optimization procedure is that the entropy information is contained in the initialization and the constraints are met asymptotically. There exists few theoretical work on the subject with the remarkable exception of \cite{bruna2018multiscale} in which the authors prove that under technical assumptions the limit distribution has its support on the set of constrained images, \ie \ the constraints are met asymptotically, and provide a lower bound on its entropy.
\subsection{Macrocanonical models}
\label{sec:macr-models}
Instead of considering \as \ constraints as in \eqref{eq:micro_model} we can consider statistical constraints
in expectation. This model was introduced by Jaynes in \cite{jaynes1957info} and  formalized in the context of image processing by Zhu et al. in \cite{zhu1998filters}. 
\begin{definition}
  The probability distribution function $\widetilde{\Pi} \in \scrP$ is a \emph{macrocanonical model} associated with the exemplar texture $x_0 \in \rset^d$, statistics $f: \rset^d \to  \rset^p$ and reference $J$ if 
  \begin{equation}
    \label{eq:macro_model}
    H_J(\widetilde{\Pi}) = \max \defEns{H_J(\Pi), \ \Pi \in \scrP, \ \Pi(f) = f(x_0) } \eqsp ,
  \end{equation}
  where $\Pi(f) = \mathbb{E}_{\Pi}(f)$.
\end{definition}
Macrocanonical models can be seen as a relaxation of the microcanonical ones.
A link between macrocanonical models and microcanonical models is highlighted by Bruna and Mallat in \cite{bruna2018multiscale}. They show that for some statistics, macrocanonical and microcanonical models have the same limit when the size of the image goes to infinity. This transition of paradigm has important consequences from a statistical point of view. First, the constraints in \eqref{eq:macro_model} require only the knowledge of the expectation of $f$ under a probability distribution $\Pi$. Secondly, in Theorem \ref{thm:maxent}  we will show that the macrocanonical model can be written as a Gibbs measure, \ie \ $\frac{\rmd \widetilde{\Pi}}{\rmd \lambda}(x) \propto \exp(-\langle \tilde{\theta}, f(x) -f(x_0) \rangle)J(x)$ for some $\tilde{\theta} \in \rset^p$. Given $\theta \in \rset^p$, when it is defined we denote by $\Pi_{\theta}$ the probability distribution defined by 
\begin{equation*}
  Z(\theta) = \int_{\rset^d} \rme^{-\langle \theta, f (x) - f(x_0) \rangle}J(x) \rmd \lambda(x) \quad \text{and} \quad \frac{\rmd \Pi_{\theta}} {\rmd \lambda}(x) =  \frac{\rme^{-\langle \theta, f(x) -f(x_0) \rangle}}{Z(\theta)} J(x) \eqsp .
\end{equation*}
\begin{theorem}[Maximum entropy principle]
  \label{thm:maxent}
  Assume that for any $\theta \in \rset^p$ we have
  \begin{equation}
    \label{eq:condition_f}
    \int_{\rset^d} \rme^{\|\theta \| \| f(x)\|} J(x) \rmd \lambda(x) < +\infty \quad \text{and} \quad \lambda\left( \defEns{ x \in \rset^d,  \ \langle \theta, f(x) \rangle < \langle \theta, f(x_0) \rangle } \right) > 0 \eqsp .
  \end{equation}
  Then there exists $\tilde{\theta} \in \rset^p$ such that $\Pi_{\tilde{\theta}}$ is a macrocanonical model associated with the exemplar texture $x_0 \in \rset^d$, statistics $f$ and reference $J$. In addition, we have
  \begin{equation}
    \label{eq:mini}
    \tilde{\theta} \in \argmin \defEns{\log \parentheseDeux{\int_{\rset^d} \exp(-\langle \theta, f (x) - f(x_0)\rangle)J(x) \rmd \lambda(x)}, \theta \in \rset^p} \eqsp .
  \end{equation}
\end{theorem}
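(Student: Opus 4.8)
The plan is to treat \cref{thm:maxent} by convex duality. Since $H_J(\Pi)=-\KL{\Pi}{\Pi_J}+\log Z_J$, maximizing $H_J$ over $\defEns{\Pi\in\scrP : \Pi(f)=f(x_0)}$ is the same as minimizing $\Pi\mapsto\KL{\Pi}{\Pi_J}$ over that set, and measures without a Lebesgue density may be discarded since for them $H_J=-\infty$. I would then introduce $L\colon\rset^p\to\rset$, $L(\theta)=\log\parentheseDeux{\int_{\rset^d}\rme^{-\langle\theta,f(x)-f(x_0)\rangle}J(x)\rmd\lambda(x)}=\log Z(\theta)$, which is exactly the objective appearing in \eqref{eq:mini}, and establish that it is finite, of class $C^1$, and convex. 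Finiteness follows from the first part of \eqref{eq:condition_f} together with $\absLigne{\langle\theta,f(x)-f(x_0)\rangle}\leq\normLigne{\theta}(\normLigne{f(x)}+\normLigne{f(x_0)})$; combined with the elementary inequality $s\rme^{cs}\leq C_c\rme^{(c+1)s}$ for $s\geq 0$, it also yields a local domination that legitimates differentiation under the integral sign, so that $L\in C^1(\rset^p)$ with $\nabla L(\theta)=-(\Pi_\theta(f)-f(x_0))$ and, in particular, $\Pi_\theta(\normLigne{f})<+\infty$ for every $\theta$; convexity is the standard fact that a log-Laplace (cumulant generating) function is convex, or a direct consequence of Hölder's inequality.

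The next step, which I expect to be the main obstacle, is to show that $L$ attains its minimum; this is where the second part of \eqref{eq:condition_f} comes in. Along a ray: for a fixed unit vector $u$, the set $\msa_u=\defEns{x\in\rset^d:\langle u,f(x)\rangle<\langle u,f(x_0)\rangle}$ has positive Lebesgue measure by hypothesis, hence positive $\Pi_J$-mass since $J>0$, and on $\msa_u$ the map $t\mapsto\rme^{-t\langle u,f(x)-f(x_0)\rangle}$ is nondecreasing with limit $+\infty$; monotone convergence then gives $\int_{\msa_u}\rme^{-t\langle u,f(x)-f(x_0)\rangle}J(x)\rmd\lambda(x)\to+\infty$, i.e. $L(tu)\to+\infty$. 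It then suffices to invoke the convex-analysis fact that a finite convex function on $\rset^p$ which tends to $+\infty$ along every ray is coercive: if not, there are $\theta_n$ with $\normLigne{\theta_n}\to\infty$ and $\sup_n L(\theta_n)<\infty$, and writing $u_n=\theta_n/\normLigne{\theta_n}\to u$ along a subsequence, convexity gives, for each fixed $R>0$, $L(Ru_n)\leq(R/\normLigne{\theta_n})L(\theta_n)+(1-R/\normLigne{\theta_n})L(0)\to L(0)$, whence $L(Ru)\leq L(0)$ for all $R$, contradicting $L(Ru)\to+\infty$. Therefore $L$ attains its minimum at some $\tilde{\theta}$ satisfying \eqref{eq:mini}, and since $L\in C^1$ we get $\nabla L(\tilde{\theta})=0$, i.e. $\Pi_{\tilde{\theta}}(f)=f(x_0)$, so the candidate $\Pi_{\tilde{\theta}}$ is feasible.

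It remains to check that $\Pi_{\tilde{\theta}}$ is entropy-optimal, which I would do via the Gibbs variational identity. Only $\Pi$ with $\KL{\Pi}{\Pi_J}<+\infty$ need be considered (the others have $H_J=-\infty$), and such a $\Pi$ has a density; since $\density{\Pi_{\tilde{\theta}}}{\Pi_J}(x)=(Z_J/Z(\tilde{\theta}))\rme^{-\langle\tilde{\theta},f(x)-f(x_0)\rangle}$ and $\Pi(\normLigne{f})<+\infty$ for feasible $\Pi$,
\begin{align*}
  \KL{\Pi}{\Pi_J}
  &=\int_{\rset^d}\log\density{\Pi}{\Pi_{\tilde{\theta}}}\,\rmd\Pi+\int_{\rset^d}\log\density{\Pi_{\tilde{\theta}}}{\Pi_J}\,\rmd\Pi\\
  &=\KL{\Pi}{\Pi_{\tilde{\theta}}}+\log(Z_J/Z(\tilde{\theta}))-\langle\tilde{\theta},\Pi(f)-f(x_0)\rangle\\
  &=\KL{\Pi}{\Pi_{\tilde{\theta}}}+\log(Z_J/Z(\tilde{\theta}))\geq\log(Z_J/Z(\tilde{\theta}))=\KL{\Pi_{\tilde{\theta}}}{\Pi_J},
\end{align*}
the last quantity being finite because $0<Z(\tilde{\theta})<+\infty$, with equality at $\Pi=\Pi_{\tilde{\theta}}$. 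Hence $\KL{\cdot}{\Pi_J}$ is minimized over the feasible set at $\Pi_{\tilde{\theta}}$, equivalently $H_J$ is maximized there, which is precisely \eqref{eq:macro_model}. The only delicate points are the integrability and differentiation-under-the-integral arguments powered by the first condition in \eqref{eq:condition_f}, and the restriction of the variational identity to $\Pi$ with finite Kullback--Leibler divergence, which ensures that all the integrals above are well defined.
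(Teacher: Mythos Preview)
Your proposal is correct and follows essentially the same route as the paper: show $L=\log Z$ is finite, smooth and convex (the paper computes the Hessian as a covariance matrix, you invoke H\"older, but this is cosmetic), prove coercivity by first obtaining $L(tu)\to+\infty$ along each ray via monotone convergence on the set $\msa_u$ and then upgrading to full coercivity by the same convexity/subsequence contradiction, deduce the existence of a minimizer $\tilde\theta$ with $\Pi_{\tilde\theta}(f)=f(x_0)$, and finally verify optimality via the decomposition $\KL{\Pi}{\Pi_J}=\KL{\Pi}{\Pi_{\tilde\theta}}+\log(Z_J/Z(\tilde\theta))$ for feasible $\Pi$, which is exactly the paper's identity $H_J(\Pi)=-\KL{\Pi}{\Pi_{\tilde\theta}}+\log Z(\tilde\theta)$ written on the KL side.
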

\begin{proof}
  Without loss of generality we assume that $f(x_0) = 0$.
  First we show that there exists $\tilde{\theta} \in \rset^p$ such that $\Pi_{\tilde{\theta}}$ is well-defined and $\Pi_{\tilde{\theta}}(f) = f(x_0)$.
  The first condition in \eqref{eq:condition_f} implies that $Z(\theta) = \int_{\rset^d} \exp(-\langle \theta, f (x) \rangle)J(x) \rmd \lambda(x)$ is defined for all $\theta \in \rset^p$. Let $\theta_0 \in \rset^p$ we have for any $\theta \in \ball{\theta_0}{1}$, the unit ball centered on $\theta_0$, and $i \in \lbrace 1, \dots, p \rbrace$, using that for any $t \in \R$, $t\leq \rme^t$ and the Cauchy-Schwarz inequality,
  \begin{equation*}
    \begin{aligned}
    &\int_{\rset^d} \left|\frac{\partial}{\partial \theta_i}\parentheseDeux{\exp(-\langle \theta, f(x) \rangle} J(x) \right| \rmd \lambda(x) \leq \int_{\rset^d} \| f(x) \| \exp(-\langle \theta, f(x) \rangle) J(x) \rmd \lambda(x) \\
    &\phantom{aaaaaaaaaaaaaaaaaaaaaaaa} \leq \int_{\rset^d} \exp((\| \theta_0 \| +2) \|f(x)\|) J(x) \rmd \lambda(x) < + \infty \eqsp .
    \end{aligned}
  \end{equation*}
Therefore $\theta \mapsto \log(Z)(\theta)$ is differentiable and we obtain that for any $\theta \in \rset^p, \nabla \log(Z)(\theta) = -\mathbb{E}_{\Pi_{\theta}}(f) = - \Pi_{\theta}(f)$. In a similar fashion we obtain that $\log(Z) \in \mathcal{C}^2(\rset^p, \rset)$ and we have
  $\frac{\partial^2 \log(Z)}{\partial \theta_i \partial \theta_j}(\theta) = \Pi_{\theta}(f_i f_j) - \Pi_{\theta}(f_i)\Pi_{\theta}(f_j)$.
The Hessian of $\log(Z)$ evaluated at $\theta$ is the covariance matrix of $f(X)$ where $X \sim \Pi_{\theta}$ and thus is non-negative which implies that $\log(Z)$ is convex.
We also have for any $\theta \in \R^p$ and $t >0$
\begin{align*}
   \log(Z)(t \theta) &=  \log\parentheseDeux{\int_{\rset^d} \exp(-t\langle \theta, f(x) \rangle) J(x) \rmd \lambda(x)} \\
                                         &\geq  \log\parentheseDeux{\int_{\langle \theta, f(x) \rangle < 0} \exp(-t\langle \theta, f(x) \rangle) J(x) \rmd \lambda(x)}
                                           \underset{t \to + \infty}{\longrightarrow} + \infty \eqsp , \numberthis \label{eq:plus_infty}
\end{align*}
where we use the first condition in \eqref{eq:condition_f} and the monotone convergence theorem. Therefore $\log(Z)$ is coercive along each direction of $\rset^p$. Let us show that $\log(Z)$ is coercive, \ie \ for any $M >0$, there exists $R>0$ such that for all $\|\theta \| \geq R$, $\log(Z)(\theta) \geq M$. Suppose that $\log(Z)$ is not coercive then there exists a sequence $(\theta_n)_{n \in \N}$ such that $\lim_{n \to +\infty} \| \theta_n \| = +\infty$ and $\log(Z)(\theta_n)_{n \in \N}$ is upper-bounded by some constant $M \geq 0$. We can suppose that $\theta_n \neq 0$. Upon extracting a subsequence we assume that $(\theta_n / \| \theta_n \|)_{n \in \N}$ admits some limit $\theta^{\star} \in \rset^p$ with $\| \theta^{\star} \| = 1$. Let $M_0 = \max \parentheseDeux{M, \log(Z)(0)}$, we have the following inequality for all $t >0$
\begin{equation*}
  \log(Z)(t\theta^{\star}) \leq \inf_{n \in \N} \parentheseDeux{|\log(Z)(t \theta^{\star}) - \log(Z)(t \theta_n/\|\theta_n\|)| + \log(Z)(t \theta_n/\|\theta_n \|)} \leq M_0 \eqsp ,
\end{equation*}
where we used the continuity of $\log(Z)$ and the fact that for $n$ large enough $t < \| \theta_n \|$ and therefore by convexity $\log(Z)(t \theta_n/\|\theta_n\|) \leq t/\|\theta_n\| \log(Z)(0) + (1 - t/\|\theta_n\|)\log(Z)(\theta_n) \leq M_0$. Hence for all $t >0$, $\log(Z)(t\theta^{\star})$ is bounded which is in contradiction with \eqref{eq:plus_infty}. We obtain that $\log(Z)$ is continuous, convex, coercive and defined over $\rset^p$. This ensures us that there exists $\tilde{\theta}$ such that $\log(Z)(\tilde{\theta})$ is minimal and therefore $\nabla_{\theta} \log(Z) (\tilde{\theta}) = - \Pi_{\tilde{\theta}}(f) = 0$. Note that we have
\begin{equation*}
  H_J(\Pi_{\tilde{\theta}}) = \int_{\rset^d} \langle \tilde{\theta}, f(x) \rangle \frac{\rmd \Pi_{\tilde{\theta}}}{\rmd \lambda}(x)  \rmd \lambda(x) + \log(Z)(\tilde{\theta}) = \log(Z)((\tilde{\theta}) \eqsp .
\end{equation*}
Now let $\Pi \in \scrP$ such that $\Pi(f) = 0$.  If $\Pi$ is not absolutely continuous \wrt \ the Lebesgue measure, then $H_J(\Pi) = -\infty < H_J(\Pi_{\tilde{\theta}})$. Otherwise if $\Pi$ is absolutely continuous \wrt \ the Lebesgue measure we have the following inequality
\begin{equation*}
  \begin{aligned}
  &H_J(\Pi) = - \int_{\rset^d} \log\parentheseDeux{\frac{\rmd \Pi}{\rmd \lambda}(x) J(x)^{-1}} \frac{\rmd \Pi}{\rmd \lambda}(x) \rmd \lambda(x) \\
           &= - \int_{\rset^d} \log\parentheseDeux{\frac{\rmd \Pi}{\rmd \lambda}(x) \left( \frac{\rmd \Pi_{\tilde{\theta}} }{\rmd \lambda}(x) \right)^{-1} } \frac{\rmd \Pi}{\rmd \lambda}(x)   - \log\parentheseDeux{ \frac{\rmd \Pi_{\tilde{\theta}} }{\rmd \lambda}(x)  J(x)^{-1} } \frac{\rmd \Pi}{\rmd \lambda}(x) \rmd \lambda(x)  \\
           &= - \KL{\Pi}{\Pi_{\tilde{\theta}}} + \log(Z)(\tilde{\theta}) \leq \log(Z)(\tilde{\theta}) = H_J(\Pi_{\tilde{\theta}}) \eqsp ,
           \end{aligned}
\end{equation*}
which concludes the proof.
\end{proof}
Theorem \ref{thm:maxent} gives a method for finding the optimal parameters $\theta \in \rset^p$ by solving the convex problem \eqref{eq:mini}. We address this issue in Section \ref{sec:sampling}.
\subsection{Some feature examples}
\label{sec:feature_ex}
In the framework of exemplar-based texture synthesis, $f$ is defined as the spatial statistics
of some image feature. For instance, let $\scrF:  \rset^d \to \prod_{i=1}^p \rset^{d_i}$ 
be a measurable mapping lifting the image $x \in \rset^d$ in a higher-dimensional space $\prod_{i=1}^p \rset^{d_i}$.
Classical examples include wavelet transforms, power transforms or neural network features. Let $(\scrF_i)_{i = 1, \dots, p}$ such that for any $x \in \rset^d$, $\scrF(x) = (\scrF_1(x), \dots, \scrF_p(x))$ and $\scrF_i: \rset^d \to \rset^{d_i}$. Then the statistics $f$ can be defined for any $x \in \rset^d$ as follows
\begin{equation}
  \label{eq:order_1_feat}
  f(x) = \left(d_1^{-1}\sum_{k=1}^{d_1} \scrF_1(x)(k), \dots, d_p^{-1} \sum_{k=1}^{d_p} \scrF_p(x)(k) \right) \eqsp .
\end{equation}
Note that this formulation includes histograms of bank of filters \cite{portilla2000parametric}, wavelet coefficients \cite{peyre2010grouplet} and scattering coefficients \cite{bruna2018multiscale}. The model defined by such statistics is stationary, \ie \ translation invariant, since we perform a spatial summation. In the following we focus on first-order features, which will be used in Section \ref{sec:empir-conv-sampl} to assess the convergence of our sampling algorithm, and neural network features, extending the work of \cite{lu2015learning}. 
\paragraph{Neural Network features.}
\label{sec:neur-netw-feat-1}
We denote by $\mathcal{A}_{n_2,n_1}(\R)$ the vector space of the affine operators from $\rset^{n_1}$ to $\rset^{n_2}$. 
Let $(A_j)_{j \in \lbrace 1, \dots, M\rbrace} \in \prod_{j=1}^M \mathcal{A}_{n_{j+1},n_{j}}(\R)$, where we let $(n_j)_{j \in \lbrace 1,\dots, M+1 \rbrace} \in \N^{M+1}$, with $M \in \N$ and $n_1=d$. Let $\varphi: \rset \to \rset$. We define for any $j \in \lbrace 1, \dots,M \rbrace$, the $j$-th layer feature $\scrG_j: \rset^d \to \rset^{n_j}$ for any $x \in \rset^d$ by 
\begin{equation*}
  \scrG_j (x) = \left( \varphi \circ A_j \circ \varphi \circ A_{j-1} \circ \dots \circ \varphi \circ A_1 \right) (x) \eqsp ,
\end{equation*}
where $\varphi$ is applied on each component of the vectors. Let $p \in \lbrace 1, \dots, M \rbrace$ and $(j_i)_{i \in \lbrace 1, \dots, p \rbrace} \in \lbrace 1, \dots, M \rbrace^p$ then we can define $\scrF$ as in \eqref{eq:order_1_feat} by
\begin{equation*}
  f(x) = \left( n_{j_1}^{-1}\sum_{k=1}^{n_{j_1}} \scrG_{j_1}(x)(k), \dots, n_{j_p}^{-1}\sum_{k=1}^{n_{j_p}} \scrG_{j_p}(x)(k) \right) \eqsp .\end{equation*}
Assuming that $\varphi$, the non-linear unit, is $\mathcal{C}^1(\rset)$ we obtain that $f$ is $\mathcal{C}^1(\rset^d, \rset^p)$. The next proposition gives conditions under which Theorem \ref{thm:maxent} is satisfied. We denote by $df$ the Jacobian of $f$.
\begin{proposition}[Differentiable neural network maximum entropy]
  Let $x_0 \in \rset^d$ and assume that $df(x_0)$ has rank $\min(d,p) = p$.
  In addition, assume that there exists $C \geq 0$ such that for any $x \in \rset$, $|\varphi(x)| \leq C(1+|x|)$. Then the conclusions of Theorem \ref{thm:maxent} hold for any $J(x)=\exp(-\vareps \| x \|^2)$ with $\vareps >0$.
\end{proposition}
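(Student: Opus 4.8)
The plan is to verify the two conditions in \eqref{eq:condition_f} of Theorem \ref{thm:maxent} for the choice $J(x) = \exp(-\vareps\|x\|^2)$, and then simply invoke that theorem. The growth assumption $|\varphi(x)| \leq C(1+|x|)$ controls the features, while the Gaussian reference $J$ provides enough decay to absorb that growth; the rank condition on $df(x_0)$ will give the second (non-degeneracy) condition.

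\paragraph{Step 1: polynomial growth of $f$.} First I would show by induction on $j$ that each layer feature $\scrG_j$ satisfies a bound of the form $\|\scrG_j(x)\| \leq C_j(1+\|x\|)$ for some constant $C_j \geq 0$. The base step uses that $A_1$ is affine, hence $\|A_1 x\| \leq \|A_1\|_{\mathrm{op}}\|x\| + \|A_1(0)\|$, followed by the componentwise bound $|\varphi(t)| \leq C(1+|t|)$, which only multiplies the linear bound by a constant and adds a constant (here one must be a little careful: applying $\varphi$ componentwise to a vector $v \in \rset^{n}$ gives $\|\varphi(v)\| \leq C(\sqrt{n} + \|v\|)$ by the triangle inequality in $\rset^n$, since $\sum_k (1+|v_k|)^2 \leq 2n + 2\|v\|^2$). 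The inductive step repeats this: composing with an affine map preserves the linear-in-$\|x\|$ bound, and composing with $\varphi$ again only changes constants. Averaging the coordinates of $\scrG_{j_i}$ can only decrease the norm (by Cauchy--Schwarz, $|n^{-1}\sum_k \scrG_j(x)(k)| \leq n^{-1/2}\|\scrG_j(x)\|$), so we conclude $\|f(x)\| \leq C_f(1+\|x\|)$ for some $C_f \geq 0$.

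\paragraph{Step 2: the first condition in \eqref{eq:condition_f}.} Fix $\theta \in \rset^p$. Using Step 1,
\begin{equation*}
  \int_{\rset^d} \rme^{\|\theta\|\|f(x)\|} J(x)\,\rmd\lambda(x) \leq \rme^{\|\theta\| C_f}\int_{\rset^d} \rme^{\|\theta\| C_f \|x\|}\rme^{-\vareps\|x\|^2}\,\rmd\lambda(x) \eqsp .
\end{equation*}
Since $\vareps > 0$, the quadratic term $-\vareps\|x\|^2$ dominates the linear term $\|\theta\| C_f\|x\|$ at infinity, so the Gaussian-type integral on the right is finite (e.g. complete the square, or bound $\|\theta\|C_f\|x\| \leq \tfrac{\vareps}{2}\|x\|^2 + C'$). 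This gives the first condition for every $\theta$.

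\paragraph{Step 3: the second condition in \eqref{eq:condition_f}.} Fix $\theta \in \rset^p \setminus\{0\}$ (for $\theta = 0$ the set is empty but the condition $\lambda(\cdot) > 0$ is required only for... actually one should check: the statement of Theorem~\ref{thm:maxent} asks the condition for \emph{any} $\theta$; for $\theta = 0$ the set $\{\langle 0, f(x)\rangle < \langle 0, f(x_0)\rangle\} = \emptyset$ has measure zero, so strictly one needs the convention that this case is excluded or vacuous — I would note that the proof of Theorem~\ref{thm:maxent} only uses the second condition for $\theta \neq 0$, cf.\ the use of $\theta^\star$ with $\|\theta^\star\|=1$ there, so it suffices to treat $\theta \neq 0$). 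The map $g_\theta \colon x \mapsto \langle \theta, f(x)\rangle$ is $\mathcal{C}^1$, and its gradient at $x_0$ is $df(x_0)^{\transpose}\theta$, which is nonzero because $df(x_0)$ has rank $p$ (so $df(x_0)^{\transpose}$ is injective) and $\theta \neq 0$. Hence $x_0$ is a regular point of $g_\theta$, so $g_\theta$ is not locally constant near $x_0$: there is a direction $v$ with $\langle \nabla g_\theta(x_0), v\rangle < 0$, and for small $t > 0$ the points $x_0 + tv$ satisfy $g_\theta(x_0+tv) < g_\theta(x_0)$. By continuity this strict inequality persists on an open neighbourhood, which has positive Lebesgue measure. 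This establishes the second condition.

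\paragraph{Conclusion and main obstacle.} Having verified both conditions in \eqref{eq:condition_f}, Theorem~\ref{thm:maxent} applies directly and yields the existence of $\tilde\theta$ with $\Pi_{\tilde\theta}$ a macrocanonical model, together with the variational characterization \eqref{eq:mini}. The only mildly delicate points are (i) bookkeeping the dimension factors $\sqrt{n_j}$ when pushing the linear growth bound through the componentwise nonlinearity in Step 1, and (ii) the handling of the degenerate case $\theta = 0$ in the second condition of \eqref{eq:condition_f}, which I expect to dispatch by observing that the proof of Theorem~\ref{thm:maxent} only ever invokes that condition for unit vectors. Neither is a serious obstacle; the substance is entirely in the interplay between the at-most-linear growth of $f$ and the Gaussian decay of $J$.
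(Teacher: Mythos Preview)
Your proof is correct and follows the same two-step structure as the paper's, but your argument for the second condition in \eqref{eq:condition_f} takes a genuinely different route. The paper invokes the submersion theorem: since $df(x_0)$ is surjective onto $\rset^p$, there exists an open neighbourhood $\mathsf{U}$ of $f(x_0)$ and a $\mathcal{C}^1$ local right-inverse $\Phi$ with $f(\Phi(y)) = y$ for $y \in \mathsf{U}$; one then simply picks $y = f(x_0) - \vareps\theta$ (or a scalar multiple of $f(x_0)$) to produce a point $x = \Phi(y)$ with $\langle\theta, f(x)\rangle < \langle\theta, f(x_0)\rangle$, and continuity gives positive measure. Your approach is more elementary: you observe directly that $\nabla_x\langle\theta, f(x)\rangle\big|_{x_0} = df(x_0)^{\transpose}\theta \neq 0$ by injectivity of $df(x_0)^{\transpose}$, then use the first-order Taylor expansion along a descent direction. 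This avoids invoking a local inversion theorem entirely and is arguably cleaner; the paper's version, by contrast, makes the role of the rank hypothesis geometrically explicit (local surjectivity of $f$). Your handling of the $\theta = 0$ case is also a useful observation that the paper leaves implicit: indeed the coercivity argument in the proof of Theorem~\ref{thm:maxent} only ever uses the second condition for unit vectors $\theta^\star$.
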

\begin{proof}
  The integrability condition is trivially checked since $f$ is sub-linear using that $|\varphi(x)| \leq C(1+|x|)$. Turning to the proof of the second condition, since $f \in \mathcal{C}^1(\rset^d, \rset^p)$ and $df(x_0)$ is surjective we can assert the existence of an open set $\mathsf{U}$ as well as $\Phi \in \mathcal{C}^1(\mathsf{U}, \rset^d)$ with $f(x_0) \in \mathsf{U}$ such that for any $y \in \mathsf{U}$, $f(\Phi(y)) = y$. Now consider $\theta \in \rset^p$. If $\theta \in f(x_0)^{\perp}$ then for $\vareps >0$ small enough $f(x_0) - \vareps \theta \in \mathsf{U}$ and we obtain that $\langle \theta,  f(\Phi(f(x_0) - \vareps \theta)) \rangle = - \vareps \| \theta \|^2 <0$. If $\theta \notin f(x_0)^{\perp}$ then there exists $\vareps >0$ small enough such that $[f(x_0)(1- \vareps), f(x_0)(1+\vareps)] \subset \mathsf{U}$. Then for any $\alpha \in (-\vareps, \vareps)$ we get that $\langle \theta, f(\Phi((1+\alpha)f(x_0))) \rangle - \langle \theta, f(x_0) \rangle = \alpha \langle \theta, f(x_0) \rangle$. By choosing $\alpha >0$, respectively $\alpha <0$, if $\langle \theta, f(x_0) \rangle >0$, respectively  $\langle \theta, f(x_0) \rangle <0$, we obtain that for any $\theta \in \rset^p$, there exists $x \in \rset^d$ such that $\langle \theta, f(x) \rangle < \langle \theta, f(x_0) \rangle$. We conclude using the continuity of $f$.
\end{proof}


\section{Minimization and sampling algorithm}
\label{sec:sampling}
\subsection{Maximizing the entropy}
\label{sec:maximizing-entropy}
In order to find $\tilde{\theta}$ such that $\Pi_{\tilde{\theta}}$ is the macrocanonical model associated with the exemplar texture $x_0$, statistics $f$ and reference $J$ we perform a gradient descent on $ \log(Z)$.
Let $\theta_0 \in \rset^p$ be some initial parameters. We define the sequence $(\theta_n)_{n \in \N}$ for any $n \in \N$ by 
\begin{equation}
  \label{eq:gradient_descent_true_projected}
  \theta_{n+1} = P_{\Theta} \parentheseDeux{\theta_n - \delta_{n+1} \nabla \log(Z)(\theta_n)} = P_{\Theta} \parentheseDeux{\theta_n + \delta_{n+1} \parenthese{\Pi_{\theta_n} (f) - f(x_0)}} \eqsp ,
\end{equation}
where $(\delta_n)_{n \in \N}$ is a sequence of step sizes with $\delta_n \geq 0$ for any $n \in \N$ and $P_{\Theta}$ is the projection over $\Theta$. The introduction of the projection operator $P_{\Theta}$ is a technical condition in order to guarantee the convergence of the algorithm in \Cref{sec:combining-dynamics}. Implementing the algorithm associated to 
\eqref{eq:gradient_descent_true_projected} requires the knowledge of the
moments of the statistics $f$ for any Gibbs measure $\Pi_{\theta}$ with $\theta \in \rset^p$. The more complex the
texture model is the more difficult it is to compute the expectation of the statistics. This expectation can be written as an integral and techniques such as the ones presented in \cite{ogden2015sequential} 
 could be used. 
We choose to approximate this expectation using a Monte Carlo strategy. Assuming that $(X_n)_{n \in \N}$ are samples from $\Pi_{\theta}$, we have that $ n^{-1} \sum_{k=1}^n f(X_k)$ is an unbiased estimator of $\Pi_{\theta}(f)$.
\subsection{Sampling from Gibbs measures}
\label{sec:sampling-from-gibbs}
We now turn to the problem of sampling from $\Pi_{\theta}$. Unfortunately, most of the time there is no easy way to
produce samples from $\Pi_{\theta}$. Nonetheless, using the ergodicity properties of specific Markov Chains we can still come up with estimators of $\Pi_{\theta}(f)$. Indeed, if $(X_n)_{n \in \N}$ is a homogeneous Markov chain with kernel $K$ and invariant probability measure $\Pi_{\theta}$, \ie \ $\Pi_{\theta} K = \Pi_{\theta}$ we obtain under suitable conditions over $f$ and $K$ that $\lim_{n \to +\infty} \expe{n^{-1} \sum_{k=1}^n f(X_k)} = \Pi_{\theta}(f)$. This leads us to consider the following Langevin dynamic for all $n \in \N$
\begin{equation}
  \label{eq:langevin_dynamic}
  X_{n+1} = X_n - \gamma_{n+1} \sum_{i=1}^p \theta_i \nabla f_i (X_n) + \sqrt{2\gamma_{n+1}} Z_{n+1} \qquad \text{and} \quad X_0 \in \rset^d \eqsp ,
\end{equation}
where $(Z_n)_{n \in \N^*}$ is a collection of independent $d$-dimensional zero mean Gaussian random variables with covariance matrix identity and $(\gamma_n)_{n\in \N}$ is a sequence of step sizes with $\gamma_n \geq 0$. The, possibly inhomogeneous, Markov Chain $(X_n)_{n \in \N}$ is associated with the sequence of kernels $(R_{\gamma_n})_{n \in \N}$ with $R_{\gamma_n}(x, \cdot) = \mathcal{N}(x- \gamma_n \sum_{i=1}^p \theta_i \nabla f_i(x), 2 \gamma_n \Id)$. Note that \eqref{eq:langevin_dynamic} is the Euler-Maruyama discretization of the continuous dynamic
$\rmd X_t = - \sum_{i=1}^p \theta_i \nabla f_i (X_t) \rmd t + \sqrt{2} \rmd B_t $
where $(B_t)_{t \geq 0}$ is a $d$-dimensional Brownian motion. 
  \subsection{Combining dynamics}
  \label{sec:combining-dynamics}
  We now combine the gradient dynamic and the Langevin dynamic. This algorithm is referred as Stochastic Optimization with Unadjusted Langevin (SOUL) algorithm in \cite{debortoli2019souk} and is defined for all $n \in \N$ and $ k \in \lbrace 0, m_n -1 \rbrace $ by the following recursion
  \begin{equation*}
    \begin{aligned}
  X_{k+1}^n &= X_k^n - \gamma_{n+1} \sum_{i=1}^p \theta_i \nabla f_i (X_k^n) + \sqrt{2\gamma_{n+1}} Z_{k+1}^n  \eqsp, \text{$X_0^n = X_{m_{n-1}}^{n-1}$, $n \geq 1$ }\eqsp, \\ 
  \theta_{n+1} &= P_{\Theta}\parentheseDeux{\theta_n + \delta_{n+1} m_{n}^{-1} \sum_{k=1}^{m_{n}} (f(X_k^n) - f(x_0))} \eqsp ,
  \end{aligned}
\end{equation*}
with $X_0^0 \in \rset^d$, $\theta_0 \in \rset^p$, $(\delta_n)_{n \in \N}$, $(\gamma_n)_{n \in \N}$ real positive sequences of step sizes and $(m_n)_{n \in \N} \in \N^{\N}$, the number of Langevin iterations. 
In \cite{debortoli2019souk} the authors study the convergence of these combined dynamics.
\vspace{-0.2cm}

\section{Experiments}
\label{sec:neur-netw-feat}
In this section, we present experiments conducted with neural network features. Texture synthesis with these features has been first done in \cite{gatys2015texture} in which the authors compute
Gram matrices, \ie \ second-order information, on different layers of a network. The underlying model
is microcanonical. In \cite{lu2015learning} the authors consider a macrocanonical model with convolutional
neural network features corresponding to the mean of filters at a given layer. In our model we consider the
features described in Section \ref{sec:neur-netw-feat-1} where the linear units and rectifier units are given
by the VGG-19 model \cite{simonyan2014vgg} which contains 16 convolutional layers
. We consider the following settings and notations:
\begin{itemize}
\item Trained \textbf{(T)} or Gaussian \textbf{(G)}: if option T is selected the weights used in the VGG convolutional
  units are given by a classical pretraining for the classification task on the ImageNet dataset \cite{deng2009imagenet}. If option G is selected we replace the pretrained weights with Gaussian random variables such that the weights of each channel and each layer have same mean and same standard deviation.
\item Shallow \textbf{(3)}, Mid \textbf{(6)}, Deep \textbf{(8)}: in our experiments we consider different settings regarding the number of layers and, more importantly, the influence of their depth. In the Shallow (3) setting we consider the linear layers number 3, 4 and 5. In the Mid (6) setting we consider the linear layers number 3, 4, 5, 6, 7 and 11. In the Deep (8) setting we consider the linear layers number 3, 4, 5, 6, 7, 11, 12 and 14.
\end{itemize}
\subsection{Empirical convergence of the sampling algorithm}
\label{sec:empir-conv-sampl}
We assert the experimental convergence of the SOUL algorithm in Figure \ref{fig:im_cv}.
We choose $\delta_n = \bigO(n^{-1})$, $\gamma_n = \bigO(n^{-1})$, $m_n = 1$, $\Theta = \rset^p$ and $\vareps=0$, \ie \ $J = 1$.
The algorithm is robust for these fixed parameters for a large number of images.
Note that even if this case is not covered by the theoretical results of \cite{debortoli2019souk}, the convergence is
improved using these rates. The drawbacks of not using parameter projection ($\Theta = \rset^p$) or image regularization ($\vareps =0$) is that
the algorithm may diverge for some images, see Figure \ref{fig:dv_layers}.

Interestingly, while neural network features capture perceptual details of the texture input they fail
to restore low frequency components such as the original color histogram. In order to alleviate this
problem, we perform a histogram matching of the output image, as in \cite{gatys2017controlling}. In the next section we investigate the advantages of using CNN channel outputs as features.
\begin{figure}[!h]
  \centering
  \hfill
  \subfloat[]{\includegraphics[width=.15\linewidth]{./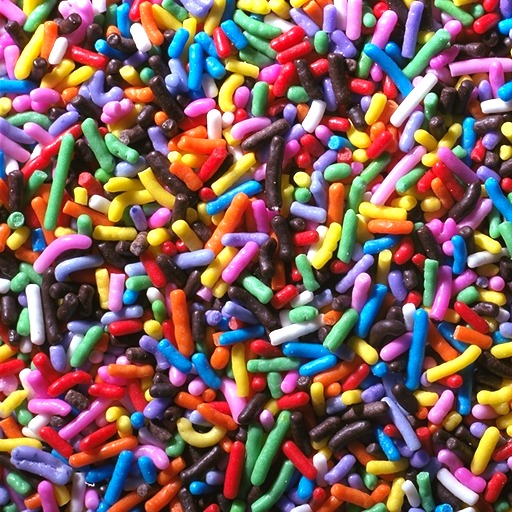}} \hfill  
  \subfloat[]{\begin{tikzpicture}[spy using outlines={rectangle, yellow,magnification=3, connect spies}]
  \node {\pgfimage[interpolate=true,width=.29\linewidth]{./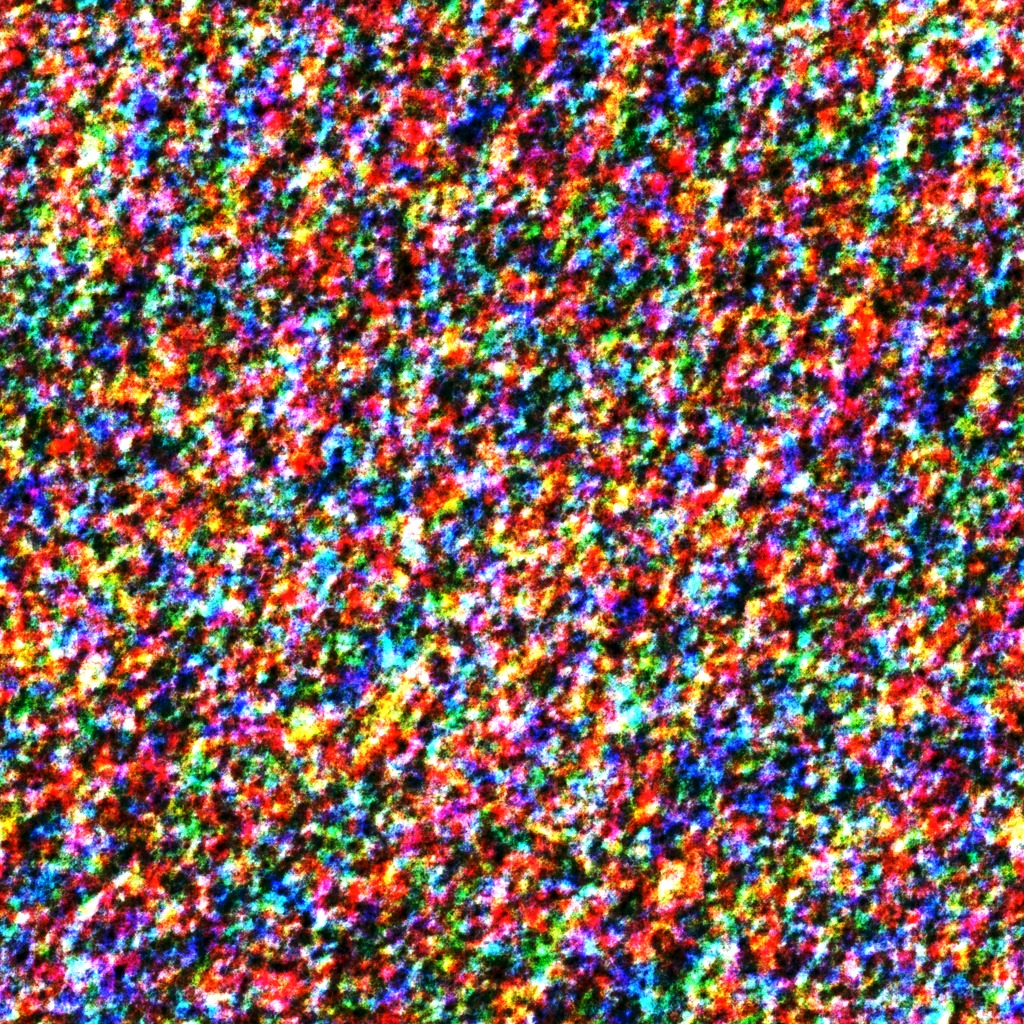}};
 \coordinate (spypoint) at (-0.8, 0.75);
 \coordinate (spyviewer) at (0.8,-0.8);
 \spy[width=2cm,height=2cm] on (spypoint) in node [fill=white] at (spyviewer);
  
\end{tikzpicture}} \hfill
  \subfloat[]{\begin{tikzpicture}[spy using outlines={rectangle, yellow,magnification=3, connect spies}]
  \node {\pgfimage[interpolate=true,width=.29\linewidth]{./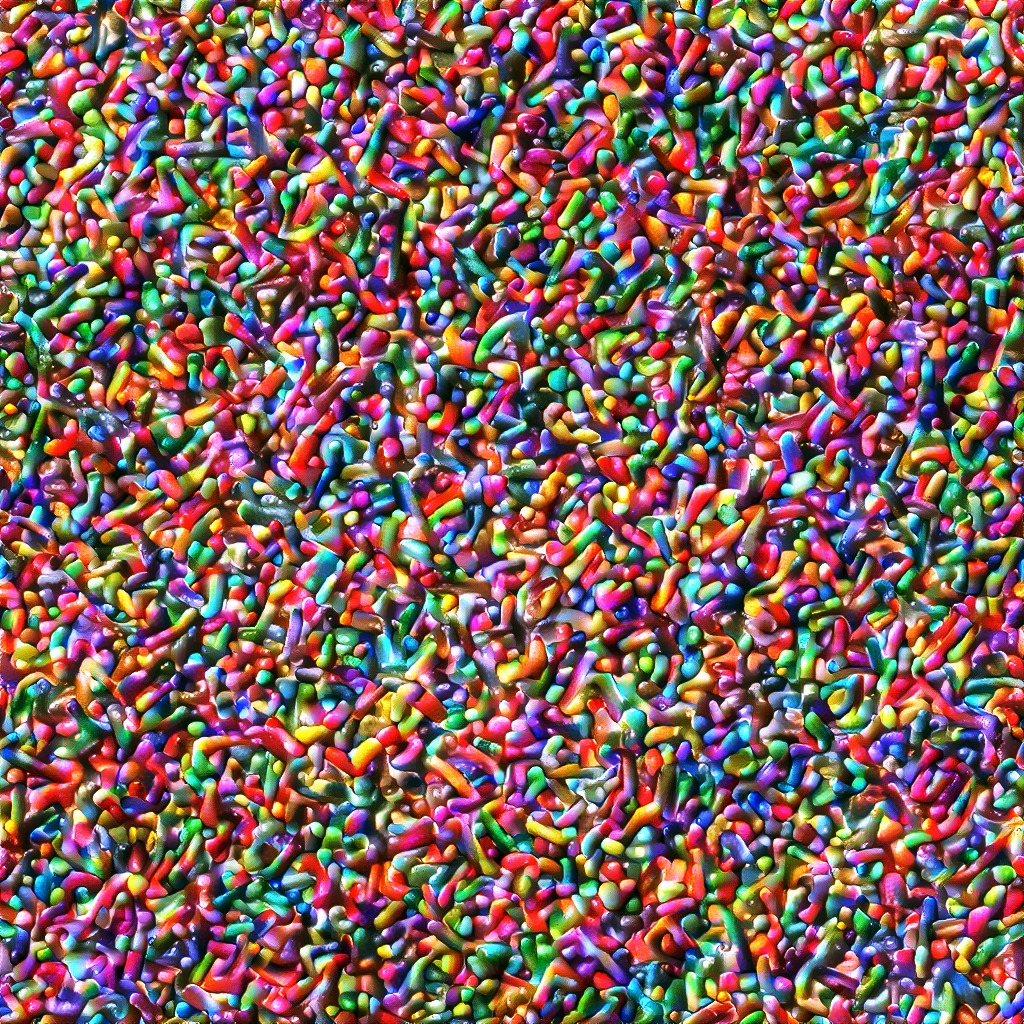}};
 \coordinate (spypoint) at (-0.8, 0.75);
 \coordinate (spyviewer) at (0.8,-0.8);
 \spy[width=2cm,height=2cm] on (spypoint) in node [fill=white] at (spyviewer);
  
\end{tikzpicture}

  \caption{\figuretitle{Empirical convergence} In (a) we present a $512 \times 512$ objective texture. In (b) we show the initialization of our algorithm, a $1024 \times 1024$ Gaussian random fields with same mean and covariance as a zero-padded version of (a). In (c) we present the result of our algorithm after 5000 iterations with (T--8). In the bottom-right corner of each image (b) and (c) we present a $\times 3$ magnification of some details of the images.}   \label{fig:im_cv}
 \end{figure}
\subsection{Neural network features}
\label{sec:neur-netw-feat-2}
\paragraph{Number of layers.}
We start by investigating the influence of the number of layers in the model by running
the algorithm for different layer configurations. If too many layers are considered the algorithm diverges.
However, if the number of layers considered in the model is reduced we observe different behaviors.
This is illustrated in Figure \ref{fig:dv_layers} where the objective image exhibits strong mid-range structure
information.
\begin{figure}[t]
  \centering
  \hfill
  \subfloat[]{\includegraphics[width=.23\linewidth]{./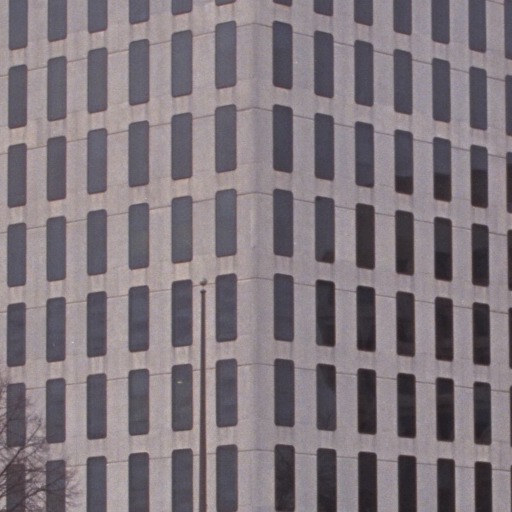}}   
  \hfill
  \subfloat[]{\includegraphics[width=.23\linewidth]{./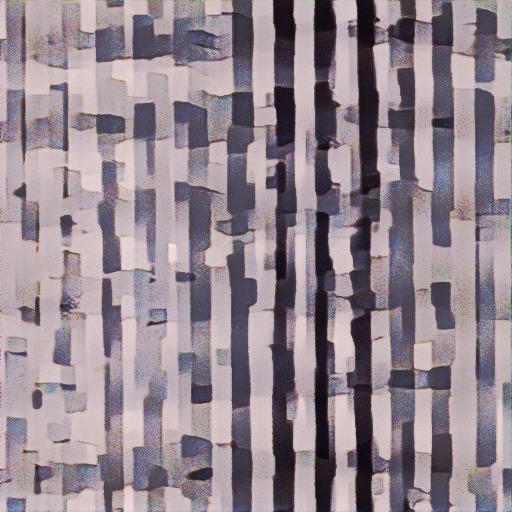}} \hfill
  \subfloat[]{\includegraphics[width=.23\linewidth]{./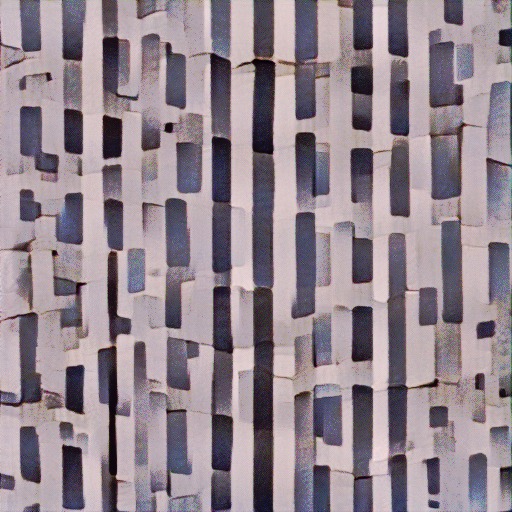}} \hfill
  \subfloat[]{\includegraphics[width=.23\linewidth]{./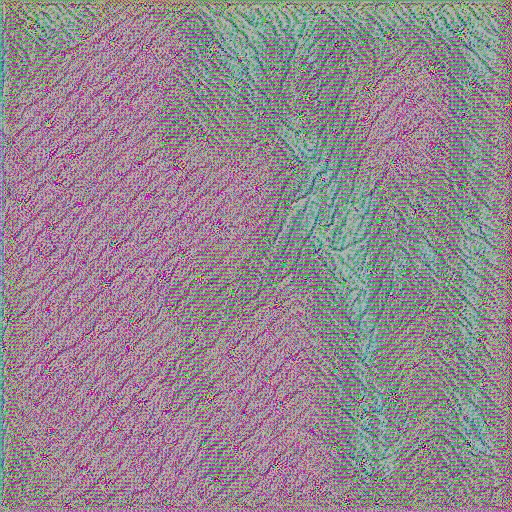}} \hfill
  \caption{\figuretitle{Depth influence} The image in (a) is the original texture. In (b), (c) and (d) we consider the outputs of the sampling algorithms (T--3), (T--6) and (T--8) algorithms. Note that more geometrical structure is retrieved in (c) than in (b) and that the model has diverged in (d).}   \label{fig:dv_layers}
\end{figure}
\paragraph{Model choice.}
In all previous experiments the weights considered in the CNN architecture are pretrained on a classification task as
in \cite{gatys2015texture} and \cite{lu2015learning}. It is natural to ask if such a pretraining is necessary. In
accordance with the results obtained by Gatys et al. \cite{gatys2015texture} we find that a model with no pretraining does not produce
perceptually satisfying texture samples, see Figure \ref{fig:im_noise}. Note that in \cite{ustyuzhaninov2016texture} the authors obtain good results with random convolutional layers and a microcanonical approach.
\begin{figure}[h]
  \centering
  \subfloat[]{\includegraphics[width=.24\linewidth]{./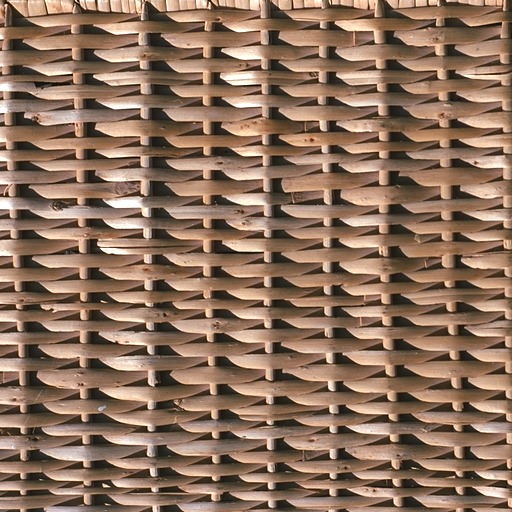}} \hfill
  \subfloat[]{\includegraphics[width=.24\linewidth]{./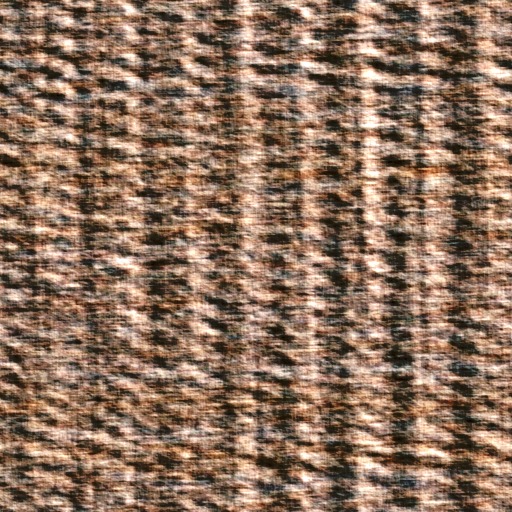}} \hfill
\subfloat[]{\includegraphics[width=.24\linewidth]{./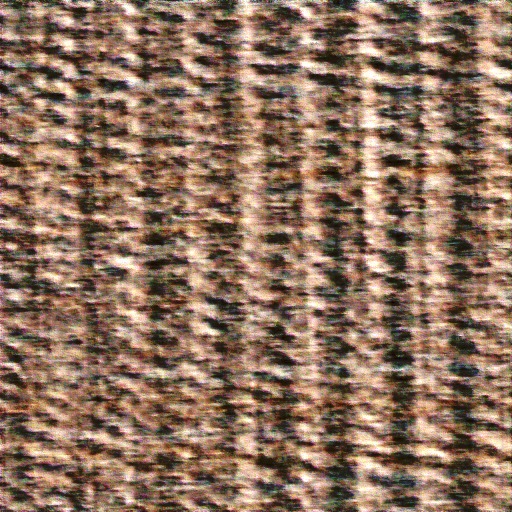}}
  \hfill
  \subfloat[]{\includegraphics[width=.24\linewidth]{./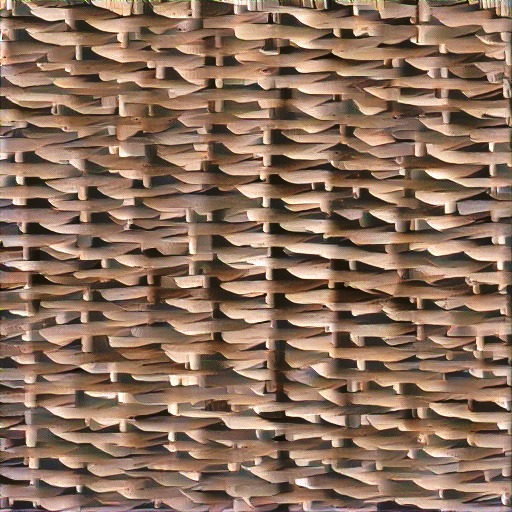}} 
\hfill
  \caption{\figuretitle{Noisy weights} In (a) we present the input image, in (b) the initialization of the algorithm and in (c), respectively (d), the output of the algorithm (G--8), respectively (T--8) after 5000 iterations. Note that no spatial structure is retrieved in (c) which is close to its Gaussian initialization.}   \label{fig:im_noise}
\end{figure}
\subsection{Comparison with state-of-the art methods}
To conclude this experimental study we provide a comparison of our results with state-of-art texture synthesis methods in Figure \ref{fig:im_soa}. Regarding regular textures our model misses certain geometrical constraints, which are encoded by the Gram matrices in \cite{gatys2015texture} for instance. However, our model relies only on 2k features, using (T--8), whereas Gatys et al. use at least 10k parameters.  One way to impose the lost geometrical constraints could be to project the spectrum of the outputs at each step of the algorithm as it was done by Liu et al. \cite{liu2016texture} in a microcanonical model.
\begin{figure}[t]
  \centering
  \subfloat[]{\includegraphics[width=.19\linewidth]{./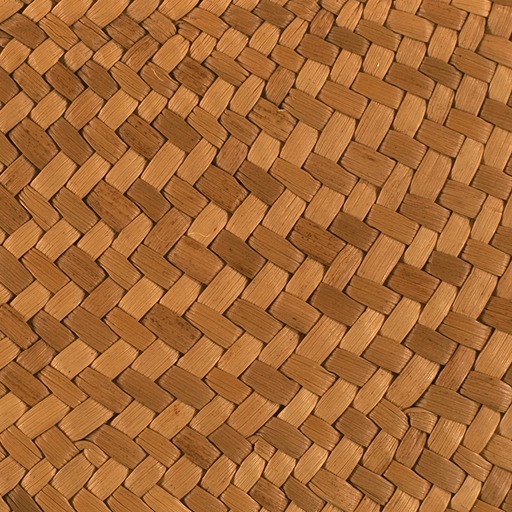}} \hfill
  \subfloat[]{\includegraphics[width=.19\linewidth]{./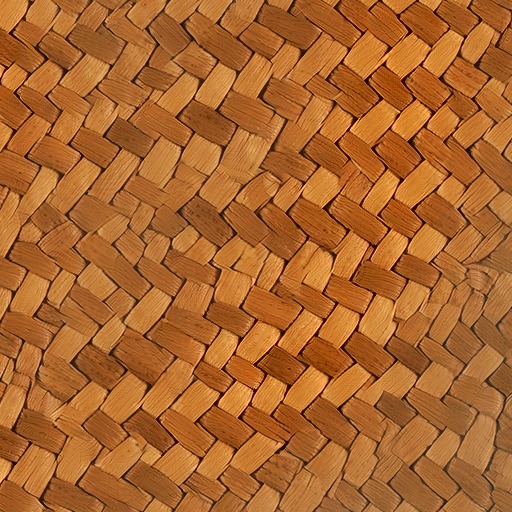}} \hfill
  \subfloat[]{\includegraphics[width=.19\linewidth]{./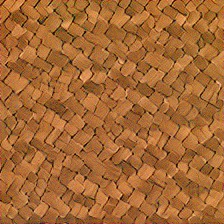}} \hfill
  \subfloat[]{\includegraphics[width=.19\linewidth]{./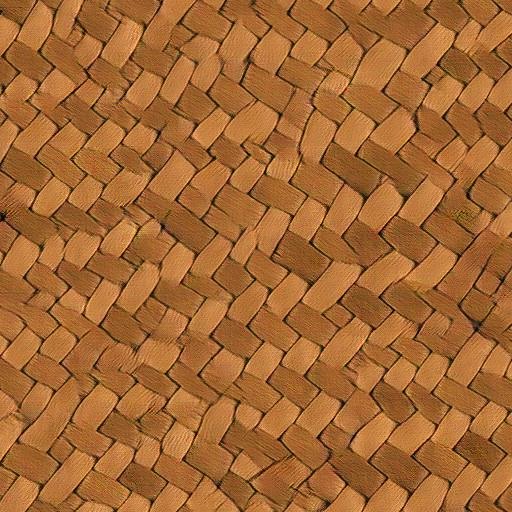}} \hfill
  \subfloat[]{\includegraphics[width=.19\linewidth]{./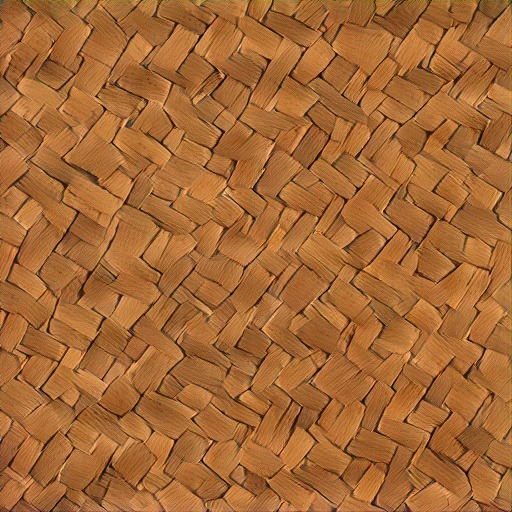}} \hfill
\caption{\figuretitle{Comparison} The input image is shown in (a). In (b) we show the output of the algorithm in \cite{gatys2015texture}, in (c) the output of the DeepFRAME method \cite{lu2015learning} and in (d) we present a result obtained with GAN texture synthesis algorithm \cite{jetchev2016texture}. Our result (e) after 5000 iterations of (T--8) is comparable but lacks spatial organization. Images (a)--(d) extracted from \cite{raad2017survey}.} \label{fig:im_soa}
\end{figure}



\section{Perspectives}
\label{sec:concl-persp}
There still exists a gap between the theoretical analysis of those algorithms which relies
on control theory tools \cite{bruna2018multiscale}, stochastic optimization techniques \cite{atchade2017perturbed} or general state space Markov chain results 
and the experimental study. Indeed the class of functions which is handled by these theoretical results is constrained (regularity assumptions, drift conditions...) and has yet to be extended to more general CNN features. In addition, they scale badly with the dimension of the data which is high in our image processing context. In a future work we wish to extend our theoretical understanding of SOUL algorithms applied to macrocanonical models and draw parallels with the microcanonical results obtained in \cite{bruna2018multiscale}.

%
%
%
\bibliographystyle{splncs04}
\bibliography{research}
\end{document}